\def \R {\mathcal{R}}
\def \e {\mathbf{e}}
\def \P {\mathcal{P}}
\def \x {\mathbf{x}}
\def \t {\mathbf{t}}
\def \z {\mathbf{z}}
\def \w {\mathbf{w}}
\definecolor{da}{gray}{0.6}
\newtheorem{thm}{Theorem}
\newtheorem{prop}{Proposition}
\title{Intra-Modal Proxy Learning for \\ Zero-Shot Visual Categorization with CLIP}
\author{%
  Qi Qian$^1$\thanks{Corresponding author} \quad Yuanhong Xu$^2$\quad Juhua Hu$^3$ \\
  $^1$ Alibaba Group, Bellevue, WA 98004, USA\\
  $^2$ Alibaba Group, Hangzhou, China\\
  $^3$ School of Engineering and Technology, University of Washington, Tacoma, WA 98402, USA\\
  \texttt{\{qi.qian, yuanhong.xuyh\}@alibaba-inc.com, juhuah@uw.edu} 
}
\begin{document}

\maketitle

\begin{abstract}
Vision-language pre-training methods, e.g., CLIP, demonstrate an impressive zero-shot performance on visual categorizations with the class proxy from the text embedding of the class name. However, the modality gap between the text and vision space can result in a sub-optimal performance. We theoretically show that the gap cannot be reduced sufficiently by minimizing the contrastive loss in CLIP and the optimal proxy for vision tasks may reside only in the vision space. Therefore, given unlabeled target vision data, we propose to learn the vision proxy directly with the help from the text proxy for zero-shot transfer. Moreover, according to our theoretical analysis, strategies are developed to further refine the pseudo label obtained by the text proxy to facilitate the intra-modal proxy learning (InMaP) for vision. Experiments on extensive downstream tasks confirm the effectiveness and efficiency of our proposal. Concretely, InMaP can obtain the vision proxy within one minute on a single GPU while improving the zero-shot accuracy from $77.02\%$ to $80.21\%$ on ImageNet with ViT-L/14@336 pre-trained by CLIP. Code is available at \url{https://github.com/idstcv/InMaP}.
\end{abstract}

\section{Introduction}

Vision-language pre-training in CLIP aims to align image-text pairs to facilitate multi-modal understanding for downstream tasks~\cite{RadfordKHRGASAM21}. By optimizing the contrastive loss consisting of images and their corresponding text descriptions, CLIP demonstrates an impressive zero-shot transfer of the pre-trained model to generic tasks. Concretely, given a downstream data set, each class has a proxy obtained from the text embedding about the corresponding class name, and then images can be categorized by finding the nearest class proxy. With class names only, zero-shot accuracy on ImageNet~\cite{RussakovskyDSKS15} can achieve $77.02\%$ with ViT-L/14@336~\cite{DosovitskiyB0WZ21} pre-trained by CLIP.

After the success of CLIP, many research efforts have been devoted to improving the transfer performance on downstream tasks. Most of developed methods can be cast into two categories, that is, few-shot refinement and zero-shot enhancement. Few-shot methods require a small amount of labeled data from the target task to refine the prediction~\cite{abs-2110-04544,ZhangZFGLDQL22,ZhouYLL22}. While these methods show a better performance than zero-shot learning, labeled examples from the target domain may not be available in real-world applications. Therefore, many methods focus on improving the zero-shot learning itself by exploring side information. On one hand, many pre-trained large language models (LLMs), e.g., GPT-3~\cite{BrownMRSKDNSSAA20}, can be leveraged to obtain better text proxy for classification~\cite{VisDesc}. On the other hand, unlabeled data from the target domain can be adopted to fine-tune an adapter network or input prompts~\cite{PantazisBJA22,ShuNHYGAX22} for alignment. In this work, we focus on understanding the behavior of CLIP by exploiting the unlabeled data without external LLMs or additional architectures for zero-shot categorization.

When considering pre-trained CLIP models and excluding any auxiliary networks for zero-shot transfer, many existing methods try to optimize input text prompts to align images and the text of class names better~\cite{ShuNHYGAX22,susx,ZhouYLL22}. However, recent observations demonstrate that the modality gap between text (i.e., language) and vision is still significant after the optimization of CLIP~\cite{LiangZKYZ22}. Consequently, the efforts on improving the text proxy can be ineffective for the vision task.

\begin{wrapfigure}{r}{2.5in}
\centering
\includegraphics[height = 1.45in]{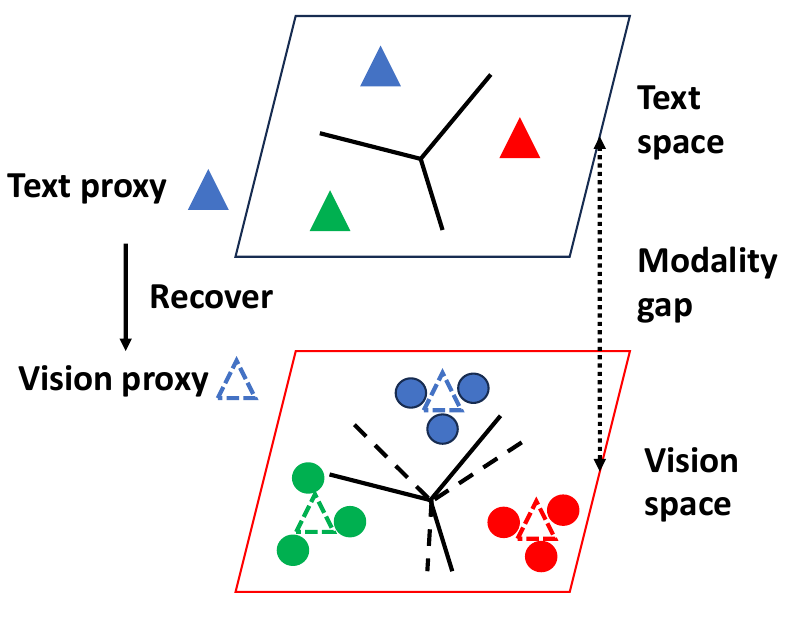}
\caption{Illustration of our proposed intra-modal proxy learning (InMaP). Triangles denote the class proxy while circles denote the vision data. Different classes are denoted with different colors. By recovering the vision proxy, it can align with vision data better (i.e., from solid line as in the text space to dashed line in the vision space).}\label{fig:illu}
\end{wrapfigure}

To mitigate the problem from the gap between the text and vision space, we propose to obtain proxies of classes in the vision space directly. First, our theoretical analysis in Proposition~\ref{prop:1} shows that the modality gap will be preserved due to a small temperature parameter in the contrastive loss of CLIP, which confirms the empirical observation in \cite{LiangZKYZ22}. Furthermore, Proposition~\ref{prop:2} indicates that the optimal proxy for the vision task can only be obtained from the vision space. Therefore, with the gap between the text and vision space, directly using the text proxy makes the zero-shot transfer sub-optimal. To recover the optimal vision proxy, we develop an intra-modal proxy learning (InMaP) method to obtain visual proxies with pseudo labels from text proxies. The proposed method is illustrated in Fig.~\ref{fig:illu}. With fixed features extracted by CLIP encoders, the optimization problem in InMaP is convex without encoders in the loop, which can be solved efficiently with gradient descent~\cite{BV2014}. Our main contributions can be summarized as follows.
\begin{itemize}[leftmargin=*]
    \item We theoretically show that the modality gap cannot be reduced sufficiently by minimizing the contrastive loss in CLIP. While the optimal proxy for image classification is in the vision space, the gap degenerates the zero-shot performance of the text proxy as shown in Theorem~\ref{thm:1}.
    \item We propose a novel algorithm to obtain the intra-modal proxy directly from the vision space. By recovering the vision proxy with help from the text proxy and an appropriate temperature suggested by Proposition~\ref{prop:3} for calibration, effective proxies can be observed from unlabeled data.
    \item Our Theorem~\ref{thm:2} indicates that the accuracy of pseudo labels from the text proxy is essential to recover the proxy in the vision space. Therefore, we further develop strategies to refine pseudo labels according to the data distribution by optimal transport (OT)~\cite{villani2009optimal}. 
    \item Experiments on 14 downstream vision tasks demonstrate both the effectiveness and efficiency of our proposal. Concretely, InMaP can achieve $80.21\%$ accuracy on ImageNet with ViT-L/14@336 using a single GPU within one minute.
\end{itemize}

\section{Related Work}

Before CLIP, zero-shot learning that aims to identify examples of novel classes without any labeled training data has attracted much attention since \cite{PalatucciPHM09}. However, most of existing works are only able to discover new classes closely related to the training classes~\cite{AkataPHS16,abs-2112-08643,ChenXLPSLYS21,FuXKG18,PalatucciPHM09,XianLSA19}, where they share the similar attributes, and have to train an individual model for each task. On the contrary, pre-training on large-scale data with the contrastive loss aligns visual and language features and makes a single CLIP model applicable for diverse downstream tasks in a straightforward way. Compared with conventional zero-shot methods, the pre-training data in CLIP may be overlapped with downstream tasks, which can result in data leak for evaluation. While the issue can be addressed by eliminating the overlapping data, the influence on the performance is negligible as discussed in \cite{RadfordKHRGASAM21}. 

After pre-training with image-text pairs, the obtained CLIP models can be transferred to various downstream tasks with few-shot or zero-shot strategies. We briefly review these two related directions as follows.
\paragraph{Few-shot transfer}

CLIP demonstrates that an ensemble of input text prompts for the text encoder can improve the zero-shot performance by a significant margin. Therefore, many methods try to further refine the prompts. Given a small set of labeled data from the target vision tasks, CoOp~\cite{ZhouYLL22} optimizes input prompts for the text encoder with labels. By formulating the prompts as learnable variables, it can obtain task-dependent prompts without any hand-crafted designs. On the contrary, CLIP-Adapter~\cite{abs-2110-04544} optimizes obtained output features from both encoders. However, it adds additional bottleneck layers for training, which increases the complexity of optimization. To reduce the efforts of training with encoders, Tip-Adapter~\cite{ZhangZFGLDQL22} demonstrates a training-free method that incorporates the prediction from a cache model consisting of the labeled image data and that from the text proxy. In this work, we also consider to minimize the training efforts by excluding encoders in optimization and the intra-modal proxy can be obtained from extracted features.

\paragraph{Zero-shot transfer}
Since labeled data are not always available, many efforts are devoted to improving zero-shot transfer. Some methods explore other large pre-trained models as the supplementary for CLIP. VisDesc~\cite{VisDesc} leverages GPT-3 to generate rich context descriptions for given class names, and thus shows superior performance over the simple prompt in CLIP. SuS-X~\cite{susx} includes additional images to mitigate the modality gap. Concretely, auxiliary images can be obtained either from a large data set (e.g., LAION~\cite{SchuhmannBVGWCC22}) or a text-to-image generation model (e.g., stable diffusion~\cite{RombachBLEO22}). The additional large models will inevitably increase the inference time, hence, recent methods consider to leverage the unlabeled target data for fine-tuning. UPL~\cite{upl} and TPT~\cite{ShuNHYGAX22} utilize the unlabeled data to optimize learnable input text prompts. SVL-Adapter~\cite{PantazisBJA22} trains an additional encoder from the unlabeled data with self-supervised learning for ensemble. Unlike existing methods, our proposal does not need any additional model and we learn the proxy for each class from the vision space directly.

\section{The Proposed Method}
In this section, we first analyze the modality gap in CLIP, and then elaborate the proposed method that recovers the proxy in the vision space to reduce the influence from the modality gap.
\subsection{Cross-Modal Learning in CLIP}
By pre-training paired images and text, a vision-language model, e.g., CLIP~\cite{RadfordKHRGASAM21}, is capable of classifying unlabeled images in the zero-shot manner. Let $f(\cdot)$ and $g(\cdot)$ denote the encoder for images and text in the pre-trained CLIP model, respectively. Given a set of unlabeled image data $\{x_i\}_{i=1}^n$ and the unique class names for the set $\{z_j\}_{j=1}^C$, their corresponding visual and text representations can be extracted as
\[\x_i = f(x_i);\quad \z_j = g(z_j)\]
where $\x_i\in\R^d$ and $\z_j\in\R^d$ have the same dimension and a unit norm. $d$ is the dimension of features. $\z_j$ can be considered as the proxy for the $j$-th class.

With the features of the $i$-th image and all text proxies, the image can be classified as
\[y_i = \arg\max_j \x_i^\top \z_j\]
where $y_i$ is the prediction from the zero-shot classification. The capacity of zero-shot prediction is from the alignment between vision and language that is optimized by the pre-training objective in CLIP. Given image-text pairs as $(x_i, t_i)$, two encoders are learned by minimizing the contrastive loss
\[\sum_i-\log\frac{\exp(\x_i^\top \t_i/\tau)}{\sum_j^m \exp(\x_i^\top \t_j/\tau)} - \log\frac{\exp(\t_i^\top \x_i/\tau)}{\sum_j^m \exp(\t_i^\top \x_j/\tau)}\]
where $m$ is the mini-batch size, $\t_i = g(t_i)$ is the extracted text representation of $t_i$ with a unit norm, and $\tau$ is the temperature for the normalized cross entropy loss.

While CLIP demonstrates an impressive zero-shot performance, the cross entropy loss focuses on cross-modal ranking and can be sub-optimal for intra-modal vision categorization. It is because that cross entropy loss is closely related to a triplet loss~\cite{QianSSHTLJ19} as
\[\forall j, \quad \x_i^\top \t_i - \x_i^\top \t_j\geq 0;\quad \t_i^\top \x_i - \t_i^\top \x_j\geq 0\]
which mainly aims to rank the multi-modal positive pair over negative pairs rather than to mix the text and vision space for modality gap reduction. Moreover, a small temperature, e.g., $0.01$ in CLIP, will amplify the difference between pairs and preserve the distance between modalities. The phenomenon can be stated as follows. The detailed proof can be found in the appendix.
\begin{prop}\label{prop:1}
Assuming $\frac{\exp(\x_i^\top \t_i/\tau)}{\sum_j^m \exp(\x_i^\top \t_j/\tau)}=\delta$, we have
\[\x_i^\top \t_k+c_1\tau \leq \x_i^\top \t_i \leq \x_i^\top \t_k + c_2\tau\]
where $k$ denotes the nearest negative pair as $k = \arg\max_{j:j\neq i} \x_i^\top \t_j$. $c_1$ and $c_2$ are constants. $c_1>0$ when $\delta > 1/2$ and $c_2>0$ when $\delta > 1/m$.
\end{prop}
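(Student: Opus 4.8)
The plan is to manipulate the defining equation for $\delta$ directly and then sandwich the sum over the negative pairs. Write $s_j := \x_i^\top \t_j$ for brevity, so that $s_k = \max_{j\neq i} s_j$ by the definition of $k$, and the hypothesis becomes $\exp(s_i/\tau) = \delta\sum_{j=1}^m \exp(s_j/\tau)$.

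First I would separate the positive term from the sum, writing $\exp(s_i/\tau) = \delta\exp(s_i/\tau) + \delta\sum_{j\neq i}\exp(s_j/\tau)$, which rearranges to the key identity $\exp(s_i/\tau) = \frac{\delta}{1-\delta}\sum_{j\neq i}\exp(s_j/\tau)$. Everything after this is just bounding the remaining sum. Since $s_k$ is the largest of the $m-1$ negative similarities, I would use $\exp(s_k/\tau) \le \sum_{j\neq i}\exp(s_j/\tau) \le (m-1)\exp(s_k/\tau)$ — the lower bound keeping only the $k$-th term, the upper bound replacing every term by the maximum. Substituting into the identity gives $\frac{\delta}{1-\delta}\exp(s_k/\tau) \le \exp(s_i/\tau) \le \frac{(m-1)\delta}{1-\delta}\exp(s_k/\tau)$.

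Taking logarithms and multiplying through by $\tau>0$ then yields $s_k + \tau\log\frac{\delta}{1-\delta} \le s_i \le s_k + \tau\log\frac{(m-1)\delta}{1-\delta}$, i.e.\ $c_1 = \log\frac{\delta}{1-\delta}$ and $c_2 = \log\frac{(m-1)\delta}{1-\delta}$. Finally I would verify the sign conditions: $c_1>0 \iff \frac{\delta}{1-\delta}>1 \iff \delta>1/2$, and $c_2>0 \iff (m-1)\delta>1-\delta \iff \delta>1/m$, matching the statement.

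The argument is elementary, so there is no genuine obstacle; the only point requiring care is which direction each bound on $\sum_{j\neq i}\exp(s_j/\tau)$ goes, and hence which constant picks up the factor $m-1$: the crude upper bound by $m-1$ copies of the maximum is what makes $c_2$ (and the upper bound on $s_i$) the loose side, while the lower bound from the single nearest negative yields the tight constant $c_1$. One should also note that $\delta<1$ is needed for $\frac{\delta}{1-\delta}$ to be defined, which is automatic as soon as there is at least one negative pair in the mini-batch.
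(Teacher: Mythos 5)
Your proposal is correct and follows essentially the same route as the paper's proof: isolate the positive term to get $\exp(\x_i^\top\t_i/\tau)=\frac{\delta}{1-\delta}\sum_{j\neq i}\exp(\x_i^\top\t_j/\tau)$, sandwich the negative-pair sum between one and $m-1$ copies of the largest term, and take logarithms, yielding exactly the paper's constants $c_1=\log\frac{\delta}{1-\delta}$ and $c_2=\log\frac{(m-1)\delta}{1-\delta}$. Your explicit check of the sign conditions and the remark that $\delta<1$ is needed are minor additions to the same argument.
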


\paragraph{Remark} Proposition~\ref{prop:1} implies that when $\delta$ is sufficiently large, the relevant text has a higher rank than the irrelevant text, which is the key for zero-shot classification. Moreover, it shows that with the same prediction $\delta$, the absolute distance between modalities, i.e., $\|\x_i - \t_i\|_2^2=2-2\x_i^\top \t_i$, partially depends on the value of temperature, i.e., $\|\x_i - \t_i\|_2^2\geq 2-2\x_i^\top \t_k -2c_2\tau$. The analysis indicates that optimizing cross entropy loss with a small $\tau$ will not pull the text and vision space together.

Some recent work empirically observed that inter-modal distribution is significantly different from the intra-modal distribution in CLIP~\cite{LiangZKYZ22,susx}, which confirms our analysis. Consequently, the proxies of class names from the text space may not be sufficient to capture the distribution of vision space, which results in the degenerated zero-shot performance. In lieu of using the text proxy directly for zero-shot visual categorization, we propose to obtain the class proxy in vision space as follows.

\subsection{Intra-Modal Proxy Learning}
With ground-truth labels $\{y_i\}$, the standard supervised learning in vision space can be written as
\begin{eqnarray}\label{eq:vision}
\min_W\sum_i -\log(\frac{\exp(\x_i^\top \w_{y_i}/\tau_I)}{\sum_j^C\exp(\x_i^\top \w_j/\tau_I)})
\end{eqnarray}
where $\w_j$ is the learnable proxy for the $j$-th class and $W = [\w_1,\dots,\w_C]\in\R^{d\times C}$. $\tau_I$ is the temperature for the optimization with vision data. With an equivalent form, it is easy to show that $\w_j$ is in the vision space spanned by $\{\x_i\}$.
\begin{prop}\label{prop:2}
Let $W^*$ be the optimal solution to 
\[
\min_W\sum_i -\log(\frac{\exp(-\|\x_i- \w_{y_i}\|_2^2/(2\tau_I))}{\sum_j^C\exp(-\|\x_i -  \w_j\|_2^2/(2\tau_I))})
\]
which is equivalent to Eqn.~\ref{eq:vision} if $\w_j$ as well as $\x_i$ has a unit norm. Then, we have
\[\forall j,\quad \w_j^* = \Pi_{\|\w\|_2=1}\frac{\sum_{i:y_i=j} (1-p_{i,j})\x_i-\sum_{k:y_k\neq j}p_{k,j}\x_k}{\sum_{i:y_i=j} (1-p_{i,j})-\sum_{k:y_k\neq j}p_{k,j}}\]
where $p_{i,j} = \frac{\exp(-\|\x_i- \w_j^*\|_2^2/(2\tau_I))}{\sum_k^C\exp(-\|\x_i -  \w_k^*\|_2^2/(2\tau_I))}$ and $\Pi_{\|\w\|_2=1}$ projects the vector to be with a unit norm.
\end{prop}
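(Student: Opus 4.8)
The plan is to derive the stated identity from the first-order (KKT) optimality conditions of the constrained problem, writing $L(W)$ for the objective. First I would check the claimed equivalence: since $\|\x_i\|_2=\|\w_j\|_2=1$ we have $\|\x_i-\w_j\|_2^2=2-2\x_i^\top\w_j$, so $-\|\x_i-\w_j\|_2^2/(2\tau_I)=\x_i^\top\w_j/\tau_I-1/\tau_I$, and the additive constant $-1/\tau_I$ cancels between the numerator and every denominator term of the softmax; hence on the feasible set $\{W:\|\w_j\|_2=1\ \forall j\}$ the distance-form objective is exactly Eqn.~\ref{eq:vision}. A minimizer $W^*$ exists because $L$ is continuous and differentiable and the feasible set (a product of $C$ unit spheres) is compact, so the first-order conditions hold at $W^*$.

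Next I would compute the gradient. Writing $p_{i,j}=\exp(-\|\x_i-\w_j\|_2^2/(2\tau_I))/\sum_{k}^{C}\exp(-\|\x_i-\w_k\|_2^2/(2\tau_I))$, the usual softmax computation gives, for the $i$-th summand, $\tau_I^{-1}(\mathbf{1}[y_i=j]-p_{i,j})(\w_j-\x_i)$ as its gradient in $\w_j$. Summing over $i$, splitting into the indices with $y_i=j$ and those with $y_k\neq j$, and using $\mathbf{1}[y_i=j](\w_j-\x_i)-p_{i,j}(\w_j-\x_i)=(1-p_{i,j})(\w_j-\x_i)$ on the first group gives
\[
\nabla_{\w_j}L=\tau_I^{-1}\Big[\Big(\textstyle\sum_{i:y_i=j}(1-p_{i,j})-\sum_{k:y_k\neq j}p_{k,j}\Big)\w_j-\Big(\textstyle\sum_{i:y_i=j}(1-p_{i,j})\x_i-\sum_{k:y_k\neq j}p_{k,j}\x_k\Big)\Big].
\]

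Then I would impose the constraint $\|\w_j\|_2^2=1$, whose gradient is $2\w_j$; Lagrangian stationarity at $W^*$ reads $\nabla_{\w_j}L+\lambda_j\w_j^*=0$ for some $\lambda_j\in\RR$ (with $\w_j^*\neq 0$, so the constraint is regular). Substituting the gradient and collecting the terms proportional to $\w_j^*$ shows $\w_j^*$ is a scalar multiple of $v_j:=\sum_{i:y_i=j}(1-p_{i,j})\x_i-\sum_{k:y_k\neq j}p_{k,j}\x_k$, where now $p_{i,j}$ is evaluated at $W^*$. Since $\|\w_j^*\|_2=1$ and dividing $v_j$ by the scalar $\sum_{i:y_i=j}(1-p_{i,j})-\sum_{k:y_k\neq j}p_{k,j}$ leaves its direction unchanged, $\w_j^*$ is the projection of that ratio onto the unit sphere, which is the asserted formula.

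The step I expect to be the main obstacle is the gradient bookkeeping — recombining the indicator and softmax contributions so that the resulting numerator and denominator line up exactly with the statement — and, more conceptually, making clear that this is an implicit fixed-point characterization (since $p_{i,j}$ depends on the unknown $W^*$) rather than a closed form; a minor point is the sign ambiguity of $\Pi_{\|\w\|_2=1}$, resolved by noting the minimizer aligns $\w_j^*$ with the weighted mean of its own-class features.
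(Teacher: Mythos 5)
Your proposal is correct and follows exactly the route the paper intends: the paper's entire proof is the one-line remark that the result "is directly from K.K.T. condition," and your derivation (checking the unit-norm equivalence with Eqn.~\ref{eq:vision}, computing the softmax gradient $\tau_I^{-1}\sum_i(\mathbf{1}[y_i=j]-p_{i,j})(\w_j-\x_i)$, and imposing Lagrangian stationarity on the unit sphere) is simply that argument written out in full, including the sign/normalization caveat the paper leaves implicit.
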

\begin{proof}
It is directly from K.K.T condition~\cite{BV2014}.
\end{proof}

\paragraph{Remark} Proposition~\ref{prop:2} demonstrates that the optimal proxy for vision categorization should be in the vision space. While proxies from the text space can work as the substitute for vision proxies in CLIP, the distance between the text and vision space (i.e., the modality gap) as illustrated in Proposition~\ref{prop:1} will degenerate the performance for vision tasks.

To further demonstrate the challenge from the gap between modalities and simplify the analysis, we decompose the proxy of class names as
\[\z_j = \sqrt{a}\z_j^x +\sqrt{1-a}\z_j^\bot\]
where $\z_j^x$ is from the vision space spanned by $\{\x_i\}$ and $\z_j^\bot$ shows the component from the orthogonal subspace such that $\z_j^{x\top}\z_j^\bot=0$. Both $\z_j^x$ and $\z_j^\bot$ have the unit norm. $\z_j^\bot$ can be considered as encoding the specific information for the text modality. With the decomposition, we can observe that it is possible to recover the optimal prediction if the vision space is covered by the text space as follows.
\begin{prop}\label{prop:3}
Let $p_{i,j}$ and $p_{i,j}'$ denote the prediction probability obtained with vision proxies $\{\w_j^*\}$ and text proxies $\{\z_j\}$, respectively. $\tau_T$ and $\tau_I$ denote the temperature in CLIP and that in vision proxy learning, respectively. If $\z_j^x = \w_j^*$ and $0<a<1$, we have $p_{i,j}' = p_{i,j}$ when $\tau_I = \tau_T/\sqrt{a}$.
\end{prop}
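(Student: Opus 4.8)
The plan is to compute both prediction probabilities in the ``distance'' form used in Proposition~\ref{prop:2}, substitute the decomposition $\z_j = \sqrt{a}\,\z_j^x + \sqrt{1-a}\,\z_j^\bot$, and show that the choice $\tau_I = \tau_T/\sqrt{a}$ makes the two softmax arguments coincide up to a $j$-independent additive constant (which cancels in the softmax). First I would write $p_{i,j}' = \frac{\exp(\x_i^\top\z_j/\tau_T)}{\sum_k \exp(\x_i^\top\z_k/\tau_T)}$ and use $\z_j^{x\top}\z_j^\bot = 0$ together with the fact that $\x_i$ lies in the span of $\{\x_i\}$, which is orthogonal to each $\z_j^\bot$, to get $\x_i^\top\z_j = \sqrt{a}\,\x_i^\top\z_j^x$. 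Hence $\x_i^\top\z_j/\tau_T = \sqrt{a}\,\x_i^\top\z_j^x/\tau_T = \x_i^\top\z_j^x/(\tau_T/\sqrt{a})$.

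Next I would invoke the hypothesis $\z_j^x = \w_j^*$, so that $\x_i^\top\z_j/\tau_T = \x_i^\top\w_j^*/(\tau_T/\sqrt{a})$. Setting $\tau_I = \tau_T/\sqrt{a}$, this is exactly $\x_i^\top\w_j^*/\tau_I$, which is the logit defining $p_{i,j}$ (in the inner-product form of Eqn.~\ref{eq:vision}, equivalently the distance form since both $\x_i$ and $\w_j^*$ are unit norm and the cross term $-\|\x_i - \w_j^*\|_2^2/(2\tau_I) = (\x_i^\top\w_j^* - 1)/\tau_I$ differs from $\x_i^\top\w_j^*/\tau_I$ only by the $j$-independent constant $-1/\tau_I$). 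Since softmax is invariant to adding a constant to all logits, the numerators and denominators match termwise after this shift, giving $p_{i,j}' = p_{i,j}$ for all $i,j$.

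The only genuinely delicate point — and the main thing to get right — is the claim $\x_i^\top\z_j^\bot = 0$: this needs $\z_j^\bot$ to be orthogonal to the \emph{entire} span of $\{\x_i\}$, not merely to $\z_j^x$. The statement's wording says $\z_j^\bot$ ``shows the component from the orthogonal subspace,'' so I would make explicit at the start of the proof that $\z_j^\bot$ is taken in the orthogonal complement of $\mathrm{span}\{\x_i\}$, which is the natural reading given the decomposition is meant to separate the in-vision-space part from the text-specific part; with that reading the orthogonality is immediate. Everything else is the routine softmax-shift bookkeeping sketched above, and the condition $0 < a < 1$ is only needed so that the rescaling $\tau_T/\sqrt{a}$ is well-defined and finite (and so that $\z_j^x$ genuinely carries nonzero weight).
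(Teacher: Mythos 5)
Your proposal is correct and follows essentially the same route as the paper's proof: substitute the decomposition $\z_j = \sqrt{a}\,\z_j^x + \sqrt{1-a}\,\z_j^\bot$, use $\x_i^\top\z_j^\bot = 0$ (since $\z_j^\bot$ lies in the orthogonal complement of the span of $\{\x_i\}$) and $\z_j^x = \w_j^*$ to reduce the logits to $\sqrt{a}\,\x_i^\top\w_j^*/\tau_T$, and conclude by matching $\sqrt{a}/\tau_T = 1/\tau_I$. Your explicit remarks on the orthogonality to the full vision span and on the softmax shift-invariance between the distance and inner-product forms only spell out what the paper leaves implicit.
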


\paragraph{Remark} Proposition~\ref{prop:3} shows that if the text space contains the whole vision space, the intra-modal prediction can be recovered from the cross-modal prediction using a larger temperature as $\tau_I = \tau_T/\sqrt{a}$, where $a$ measures the overlap between modalities. 

However, recent observations indicate that the text and vision space learned by CLIP are distinct with a clear margin~\cite{LiangZKYZ22} and it is hard for the text space to cover the vision space. The difference between the text and vision proxy can be lower-bounded in the following theorem.

\begin{thm}\label{thm:1}
Let $Z$ and $W^*$ denote the text proxies and optimal vision proxies, where $Z = \sqrt{a}Z^x + \sqrt{1-a}Z^\bot$ and $Z^x$ consists of a rank $r$ approximation of $W^*$ as $Z^x = U_rA^\top$. $W^* = U\Sigma V^\top = \sum_i^{d'} s_i u_iv_i^\top$, where $d'=\min\{d, C\}$ and $s_1\geq \dots\geq s_{d'}\geq 0$. Then, we have
\[\|Z-W^*\|_F^2\geq 2C(1-\sqrt{a})+\sqrt{a}\sum_{i=r+1}^{d'} s_i^2\]
\end{thm}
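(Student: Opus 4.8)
The plan is to lower-bound $\|Z - W^*\|_F^2$ by expanding the square and exploiting the orthogonal decomposition $Z = \sqrt{a}Z^x + \sqrt{1-a}Z^\bot$. First I would write
\[
\|Z - W^*\|_F^2 = \|Z\|_F^2 - 2\langle Z, W^*\rangle + \|W^*\|_F^2.
\]
Since each column $\z_j$ has unit norm, $\|Z\|_F^2 = C$. For the cross term, the orthogonality $Z^{x\top}Z^\bot = 0$ (columnwise) should give $\langle Z, W^*\rangle = \sqrt{a}\,\langle Z^x, W^*\rangle + \sqrt{1-a}\,\langle Z^\bot, W^*\rangle$; the key reduction will be to argue the $Z^\bot$ term contributes nothing useful to an \emph{upper} bound on $\langle Z, W^*\rangle$ — either because $Z^\bot$ lies in the orthogonal complement of the column space of $W^*$ (so that term is exactly $0$), or because one bounds $\langle Z^\bot, W^*\rangle \le \|Z^\bot\|_F\|W^*\|_F$ and tracks it through. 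I expect the intended setup makes $\langle Z^\bot, W^*\rangle = 0$, leaving $\langle Z, W^*\rangle = \sqrt{a}\,\langle Z^x, W^*\rangle$.

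Next I would bound $\langle Z^x, W^*\rangle$ using the SVD $W^* = \sum_{i=1}^{d'} s_i u_i v_i^\top$ and the fact that $Z^x = U_r A^\top$ is a rank-$r$ object whose column space is spanned by $u_1,\dots,u_r$ (the top $r$ left singular vectors). The trace inner product then satisfies $\langle Z^x, W^*\rangle = \langle U_r A^\top, W^*\rangle = \langle A^\top, U_r^\top W^*\rangle$, and since $U_r^\top W^* = \sum_{i=1}^r s_i (U_r^\top u_i) v_i^\top$ only picks up the top $r$ singular components, the maximum of this inner product over the relevant constraint on $A$ (each column of $Z^x$ having unit norm, matching $\|\z_j^x\|=1$) is controlled by $\sum_{i=1}^r s_i$ at best, but more to the point the Eckart–Young / best-rank-$r$ structure gives $\langle Z^x, W^*\rangle \le \sum_{i=1}^r s_i^2 + (\text{something})$; the cleanest route is actually $2\langle Z^x, W^*\rangle \le \|Z^x\|_F^2 + \|P_r W^*\|_F^2$ via AM–GM where $P_r$ is the projection onto $\mathrm{span}(u_1,\dots,u_r)$, combined with $\|P_r W^*\|_F^2 = \sum_{i=1}^r s_i^2$ and $\|W^*\|_F^2 = \sum_{i=1}^{d'} s_i^2$. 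Substituting back and using $\|Z^x\|_F^2 = C$ yields $2\langle Z, W^*\rangle \le \sqrt{a}(C + \sum_{i=1}^r s_i^2)$, hence
\[
\|Z - W^*\|_F^2 \ge C - \sqrt{a}\Bigl(C + \sum_{i=1}^r s_i^2\Bigr) + \sum_{i=1}^{d'} s_i^2 = C(1 - \sqrt{a}) + \sum_{i=1}^{d'} s_i^2 - \sqrt{a}\sum_{i=1}^r s_i^2.
\]
Finally I would split $\sum_{i=1}^{d'} s_i^2 = \sum_{i=1}^r s_i^2 + \sum_{i=r+1}^{d'} s_i^2$ and regroup $\sum_{i=1}^r s_i^2 - \sqrt{a}\sum_{i=1}^r s_i^2 = (1-\sqrt{a})\sum_{i=1}^r s_i^2 \ge 0$, which is where the constant could improve; to land exactly on the stated bound $2C(1-\sqrt{a}) + \sqrt{a}\sum_{i=r+1}^{d'} s_i^2$ I would instead keep $\|W^*\|_F^2$ possibly bounded by $C$ (each optimal proxy column has unit norm), so $\sum_{i=1}^{d'} s_i^2 \le C$ is the wrong direction — rather $\|W^*\|_F^2 = C$ exactly if all columns are unit-norm after the projection step in Proposition~\ref{prop:2}, which then converts the $s_i^2$ terms cleanly. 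The main obstacle I anticipate is pinning down precisely which normalization constraints are imposed on the columns of $Z^x$ versus $W^*$ and on the rank-$r$ factor $A$, since the final constant $2C(1-\sqrt{a})$ (rather than $C(1-\sqrt{a})$) strongly suggests a second application of a unit-norm identity — likely $\|Z^x\|_F^2 = C = \|W^*\|_F^2$ being used in two places, once in $\|Z\|_F^2$ and once to replace $\|W^*\|_F^2$ in the AM–GM step — and getting the bookkeeping on the residual singular values $\sum_{i=r+1}^{d'} s_i^2$ to come out with coefficient exactly $\sqrt{a}$ will require care with whether Eckart–Young is applied to $W^*$ or to a rescaled version.
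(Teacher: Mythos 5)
Your proposal is correct and follows essentially the same route as the paper: expand $\|Z-W^*\|_F^2$ using the unit-norm columns ($\|Z\|_F^2=\|W^*\|_F^2=C$), drop the $\langle Z^\bot,W^*\rangle$ cross term, and control the overlap of $Z^x$ with $W^*$ through the tail singular values, your projection-plus-AM--GM bound $\langle Z^x,W^*\rangle\le\tfrac12\bigl(\|Z^x\|_F^2+\|P_rW^*\|_F^2\bigr)$ being algebraically equivalent to the paper's step $\|Z^x-W^*\|_F^2\ge\min_A\|U_rA-W^*\|_F^2\ge\sum_{i=r+1}^{d'}s_i^2$. The bookkeeping you were unsure about closes immediately: since Proposition~\ref{prop:2} normalizes the columns of $W^*$, $\sum_{i=1}^{d'}s_i^2=\|W^*\|_F^2=C$ (a fact the paper's proof also uses implicitly), and substituting it into your expression $C(1-\sqrt{a})+\sum_{i=1}^{d'}s_i^2-\sqrt{a}\sum_{i=1}^{r}s_i^2$ yields exactly $2C(1-\sqrt{a})+\sqrt{a}\sum_{i=r+1}^{d'}s_i^2$, with no rescaled Eckart--Young argument needed.
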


\paragraph{Remark} Theorem~\ref{thm:1} shows that the gap between text and vision proxy comes from two parts. The former term indicates the distance to the irrelevant text space as $1-\sqrt{a}$. The latter one depicts the approximation loss from the low-rank overlapping between text space and vision space with $r<d'$. Due to the inherent difference between these two modalities, the distance between $Z$ and $W$ is hard to be minimized.

Therefore, to improve the zero-shot performance in the vision space, we consider to obtain the vision proxy in lieu of directly using the text proxy. The main challenge comes from the fact that the label $\{y_i\}$ is unavailable in zero-shot classification. Fortunately, the intra-modal proxy can be learned by mimicking the proxy from the other modal. Concretely, we propose to minimize the KL divergence between distributions from the text and vision proxy as
\begin{eqnarray}\label{eq:proxy}
\min_W L(P',W) = \sum_i D_{\mathrm{KL}}(P_i'||P_i)
\end{eqnarray}
where $P_i'$ and $P_i$ indicate the distribution estimated by the text proxy $Z$ and the learnable vision proxy $W$, respectively. 
\[p_{i,j}' = \frac{\exp(\x_i^\top \z_j/\tau_T)}{\sum_k^C\exp(\x_i^\top \z_k/\tau_T)};\quad p_{i,j} = \frac{\exp(\x_i^\top \w_j/\tau_I)}{\sum_k^C\exp(\x_i^\top \w_k/\tau_I)}\]
Proposition~\ref{prop:3} suggests that $\tau_I$ for $P_i$ should be larger than $\tau_T$ in $P_i'$ to calibrate the magnitude of the distribution and our ablation study confirms the analysis. 

Unlike prompt learning methods~\cite{ShuNHYGAX22}, the problem in Eqn.~\ref{eq:proxy} is defined with extracted features, which does not require the backward operation with large encoder networks for optimization. Moreover, it is convex and the optimal vision proxy can be obtained efficiently by the standard gradient descent~\cite{BV2014}. 

By learning with the pseudo label predicted from the text proxy, we can recover the optimal proxy in the vision space as shown in the following theorem.

\begin{thm}\label{thm:2}
If simplifying the loss function as $L(P',W) = -\sum_{i,j}P'_{i,j}\log(P_{i,j})$ and assuming that it is $\mu$-strongly convex in $W$ and letting
\[W'^* = \arg\min_W L(P',W);\quad W^* = \arg\min_W L(Y,W)\]
where $Y_i$ is the ground-truth label distribution for $\x_i$, we have
\[\|W'^* - W^*\|_F^2\leq\frac{2}{\mu}\|P' - Y\|_F\|\log(P_{W'^*}) - \log(P_{W^*})\|_F\]
\end{thm}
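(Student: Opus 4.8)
The plan is to exploit the optimality conditions of the two minimization problems and the assumed strong convexity of $L(\cdot,W)$ in $W$. Since $W'^*$ minimizes $L(P',\cdot)$ and $W^*$ minimizes $L(Y,\cdot)$, I would first write down the first-order stationarity conditions $\nabla_W L(P',W'^*) = 0$ and $\nabla_W L(Y,W^*) = 0$. The key structural observation is that $L(P',W) = -\sum_{i,j}P'_{i,j}\log(P_{i,j})$ depends on the label distribution only linearly through the coefficients $P'_{i,j}$, so the gradient $\nabla_W L(Q,W)$ is an affine function of $Q$; in particular $\nabla_W L(P',W) - \nabla_W L(Y,W) = \nabla_W\big(-\sum_{i,j}(P'_{i,j}-Y_{i,j})\log(P_{i,j})\big)$, which can be controlled in terms of $\|P'-Y\|_F$ and the derivative of $\log(P_{i,j})$ with respect to $W$.

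The main steps, in order, would be: (1) Use $\mu$-strong convexity of $W\mapsto L(P',W)$ to get the standard inequality $\mu\|W'^* - W^*\|_F^2 \leq \langle \nabla_W L(P',W^*) - \nabla_W L(P',W'^*),\, W^* - W'^*\rangle$; since $\nabla_W L(P',W'^*) = 0$ this reduces to $\mu\|W'^* - W^*\|_F^2 \leq \langle \nabla_W L(P',W^*),\, W^* - W'^*\rangle$. (2) Since $\nabla_W L(Y,W^*) = 0$, replace $\nabla_W L(P',W^*)$ by the difference $\nabla_W L(P',W^*) - \nabla_W L(Y,W^*)$. (3) Evaluate this difference: because the loss is linear in the label distribution, it equals $\sum_{i,j}(P'_{i,j} - Y_{i,j})\,\partial_W\!\big(-\log(P_{i,j})|_{W = W^*}\big)$, and here I would use the identity that $\log(P_{i,j})$ as a function of $W$ differs from a linear term by the log-partition function, so its gradient involves exactly $\log(P_{W^*})$ up to terms that cancel in the inner product against differences of distributions. (4) Apply Cauchy–Schwarz / the Frobenius-norm bound to $\langle \nabla_W L(P',W^*) - \nabla_W L(Y,W^*),\, W^* - W'^*\rangle$ to pull out $\|P' - Y\|_F$ and a factor involving $\|\log(P_{W'^*}) - \log(P_{W^*})\|_F$, and finally divide through by $\|W'^* - W^*\|_F$ to get one power of the distance on the left, yielding the stated bound with the constant $2/\mu$.

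The step I expect to be the main obstacle is step (3): carefully identifying why the difference of gradients $\nabla_W L(P',W^*) - \nabla_W L(Y,W^*)$, when paired against $W^* - W'^*$, produces precisely the factor $\|\log(P_{W'^*}) - \log(P_{W^*})\|_F$ rather than, say, $\|\log(P_{W^*})\|_F$ alone. The resolution should come from re-symmetrizing: one also has $\nabla_W L(P',W'^*) = 0$ and $\nabla_W L(Y, \cdot)$ evaluated at $W'^*$, so the right combination is to compare gradients of $L(P',\cdot)$ and $L(Y,\cdot)$ at \emph{both} points $W^*$ and $W'^*$ and subtract, so that only the \emph{change} in $\log P$ between the two optima survives. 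Tracking which evaluation point each $\log(P_{\cdot})$ is attached to, and confirming that the linear-in-label structure makes all the $W$-independent pieces drop out under the inner product with $W^* - W'^*$, is the delicate bookkeeping that carries the argument; everything else is Cauchy–Schwarz and the definition of strong convexity.
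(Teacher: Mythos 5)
There is a genuine gap, and it sits exactly where you predicted: step (3). The difference of gradients $\nabla_W L(P',W^*)-\nabla_W L(Y,W^*)$ is the $W$-gradient of $\langle P'-Y,\,-\log P_W\rangle$ evaluated at $W^*$; for the softmax model this gradient is built from terms of the form $(P'_{i,j}-Y_{i,j})\,\x_i(\mathbf{1}[j=k]-p_{i,k})/\tau_I$, i.e.\ features weighted by \emph{probabilities}, not log-probabilities, and nothing in it is evaluated at $W'^*$. So no amount of bookkeeping inside that single gradient difference can manufacture the factor $\|\log(P_{W'^*})-\log(P_{W^*})\|_F$. Your proposed fix (``re-symmetrize'' by also looking at gradients at $W'^*$) does not repair this: the quantities $\nabla_W L(P',W'^*)$ and $\nabla_W L(Y,W^*)$ are both zero, so comparing gradients at the two optima only reproduces the same single gradient difference. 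Moreover, even granting a Cauchy--Schwarz step, your route gives $\mu\|W'^*-W^*\|_F^2\le\|\nabla_W L(P',W^*)\|_F\,\|W^*-W'^*\|_F$, hence a bound on the \emph{first} power of the distance by $\frac{1}{\mu}$ times a gradient norm; squaring gives a $\frac{1}{\mu^2}(\cdot)^2$-type inequality, which is structurally different from the claimed $\frac{2}{\mu}\|P'-Y\|_F\,\|\log(P_{W'^*})-\log(P_{W^*})\|_F$.

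The paper's argument avoids gradients of the cross-term entirely and works at the level of function values, which is where the two log factors actually come from. The key identity is linearity of $L$ in its first argument: $L(P',W)-L(Y,W)=\langle P'-Y,\,-\log P_W\rangle$, applied \emph{twice}, once at $W^*$ and once at $W'^*$. Optimality of $W'^*$ for $L(P',\cdot)$ gives the upper bound $L(P',W'^*)-L(Y,W^*)\le L(P',W^*)-L(Y,W^*)=\langle P'-Y,\,-\log(P_{W^*})\rangle$; strong convexity of $L(Y,\cdot)$ at its minimizer $W^*$ (where the gradient vanishes) gives the lower bound $L(P',W'^*)-L(Y,W^*)=L(Y,W'^*)-L(Y,W^*)+\langle P'-Y,\,-\log(P_{W'^*})\rangle\ge\frac{\mu}{2}\|W'^*-W^*\|_F^2+\langle P'-Y,\,-\log(P_{W'^*})\rangle$. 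Subtracting yields $\frac{\mu}{2}\|W'^*-W^*\|_F^2\le\langle P'-Y,\,\log(P_{W'^*})-\log(P_{W^*})\rangle$, and a single Cauchy--Schwarz finishes. Your instinct that the linear-in-label structure is the engine of the proof is right, but it must be exploited on objective values at both optima rather than on a gradient difference at one point.
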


\paragraph{Remark} Theorem~\ref{thm:2} shows that the distance between the recovered vision proxy and the optimal vision proxy is bounded by the gap between pseudo labels predicted by text proxy and the ground-truth label distribution. After pre-training CLIP models on a large scale data set, it can approximate the distribution of real data, which enables learning of the intra-modal proxy.

\subsection{Pseudo Label Refinement}
According to Theorem~\ref{thm:2}, the accuracy of pseudo label is essential for recovering the appropriate vision proxy. Therefore, we develop strategies to improve the quality of pseudo labels. 

First, since we have the whole test set rather than a single example for zero-shot learning, the labels obtained from the text proxy can be refined according to the geometry of the target vision data. Given the logits matrix $M\in\R^{n\times C}$ with $M_{i,j} = \x_i^\top \z_j$, the original pseudo label can be obtained by solving the problem
\[P'=\arg\max_{P} \langle P,M \rangle +\tau_T H(P)\quad  s.t.\quad  \forall i, \sum_j P_{i,j}=1/n;\quad \forall i,j, P_{i,j}\geq 0\]
where $H(P)$ computes the entropy of $P$ and the obtained results should be multiplied by $n$ as the predicted labels.
By incorporating a reference distribution $q\in\R^{C}$ over classes, the problem can be rewritten as
\begin{eqnarray}\label{eq:sinkhorn}
\max_{P} \langle P,M \rangle +\tau_T H(P)\quad  s.t.\quad  \forall i, \sum_j P_{i,j}=1/n;\quad\forall j, \sum_i P_{i,j}=q_j;\quad \forall i,j, P_{i,j}\geq 0
\end{eqnarray}
which is known as Sinkhorn distance~\cite{Cuturi13} and is an approximation of optimal transport distance. The refined pseudo label can be obtained by an efficient iterative method~\cite{Cuturi13}.

The last challenge is from observing the appropriate reference distribution. Without any prior knowledge, a balanced distribution $q = \mathbf{1}/C$ is a popular selection in practice. By investigating the prediction from the text proxy, it implies a distribution as $\hat{q}_j = \sum_i P'_{i,j}/n$. Then, a reference distribution can be obtained by further smoothing $\hat{q}$ as
\begin{eqnarray}\label{eq:gamma}
q_j = \frac{\hat{q}_j^\gamma}{\sum_k\hat{q}_k^\gamma }
\end{eqnarray}
where $\gamma\leq 1$ is the smoothing parameter and $\gamma=1$ implies the original distribution from the text proxy. 

Moreover, inspired by the semi-supervised learning~\cite{SohnBCZZRCKL20}, soft label with high confidence can be converted into one-hot label to eliminate the influence from irrelevant classes. Let $\alpha$ be the threshold. After obtaining the pseudo label from Eqn.~\ref{eq:sinkhorn}, we have
\begin{eqnarray}\label{eq:threshold}
    \tilde{P}_{i} = \left\{\begin{array}{cl}\e_k& k=\arg\max_j P'_{i,j},\quad P'_{i,k}>\alpha\\P'_{i}&o.w.\end{array}\right.
\end{eqnarray}
where $\e_k\in\{0,1\}^C$ is the one-hot vector and the $k$-th element is $1$.

The proposed intra-modal proxy learning method (InMaP) is summarized in Alg.~\ref{alg:1}. Pre-trained CLIP models are only applied to extract features as in Step~2, which is the inevitable cost of conventional zero-shot learning. After obtaining features, the computational overhead of our proposed optimizations in Steps~3-5 is mild as shown in the analysis of running time.

\begin{algorithm}[t]
\caption{\textbf{In}tra-\textbf{M}od\textbf{a}l \textbf{P}roxy Learning (InMaP)}
\begin{algorithmic}[1]
\STATE {\bf Input:} Unlabeled image set $\{x_i\}$, class names $\{z_j\}$, pre-trained CLIP encoders $(f,g)$, iterations $T_w$, $T_p$, temperature $\tau_T$, $\tau_I$, threshold $\alpha$
\STATE Extract features $\{\x_i\}$ and $\{\z_j\}$ using pre-trained encoders $f$ and $g$, respectively.
\STATE Obtain pseudo labels by solving Eqn.~\ref{eq:sinkhorn} with $T_p$ iterations.
\STATE Refine pseudo labels with the threshold as in Eqn.~\ref{eq:threshold}.
\STATE Obtain vision proxies $W$ by solving Eqn.~\ref{eq:proxy} with $T_w$ iterations.
\RETURN $y_i = \arg\max_j \x_i^\top \w_j$
\end{algorithmic}\label{alg:1}
\end{algorithm}

\section{Experiments}
To evaluate the proposed method, we follow the common practice and conduct experiments on ImageNet~\cite{RussakovskyDSKS15} and $13$ diverse downstream vision tasks for zero-shot transfer. Text prompts are important for obtaining appropriate text proxies. We have the selected $7$ prompts in \cite{susx} as templates for ensemble, which shows better performance than a single prompt or multiple prompts in CLIP. For the proposed method, the temperature for obtaining pseudo labels with the text proxy $\tau_T$ is set to $0.01$ as obtained by CLIP. The temperature for recovering the visual proxy $\tau_I$ is set to $0.04$ for all experiments. The intra-modal proxy is learned by standard projected gradient descent, where the initial learning rate is $10$ and the number of iterations is $2,000$ for sufficient training. The learning rate will be decayed by $2$ when the norm of gradient increases. Sinkhorn distance is optimized by $20$ iterations for refining pseudo labels. All experiments are conducted on a single V100 GPU.

\subsection{Ablation Study}
First, we have the ablation experiments on ImageNet to demonstrate the efficacy of our method InMaP. Pre-trained ResNet-50~\cite{HeZRS16} is adopted as the vision encoder and the vanilla zero-shot method in CLIP is denoted as ``Baseline''.

\paragraph{Temperature $\tau_I$} $\tau_I$ is used to recover the proxy in the vision space as in Eqn.~\ref{eq:proxy}. According to our analysis in Proposition~\ref{prop:3}, $\tau_I$ should be larger than $\tau_T$ in CLIP to leverage the overlap between the text and vision space. Table~\ref{ta:tau} shows the result when varying $\tau_I$ in $\{0.01,0.02,0.03,0.04,0.05\}$. ``Sim'' computes the mean similarity between each example and its nearest proxy, depicting the gap (lower the similarly, higher the gap). 

First, when $\tau_I=0.01$, the gain is mainly from learning the vision proxy with refined labels and is already $2.59\%$ better than the baseline with text proxy. By increasing $\tau_I$, the similarity between examples and proxies also increases, which confirms our analysis in Proposition~\ref{prop:1}. With an appropriate $\tau_I$, an additional gain of $0.83\%$ can be observed with the similarity of $0.39$. According to Proposition~\ref{prop:3}, a large $\tau_I$ is crucial to recover the optimal vision proxy with the guidance from the text proxy. Thus, we fix $\tau_I=0.04$ for the main comparison.

\begin{table}[!ht]
\centering
\begin{minipage}{0.25\linewidth}
\centering
\caption{Effect of $\tau_I$.}\label{ta:tau}
\begin{tabular}{|l|c|c|}\hline
$\tau_I$&Acc\%&Sim\\\hline
Baseline&60.32&0.26\\
0.01&62.91&0.31\\
0.02&63.22&0.32\\
0.03&63.66&0.35\\
0.04&63.74&0.39\\
0.05&63.29&0.43\\\hline
\end{tabular}
\end{minipage}
\begin{minipage}{0.25\linewidth}
\centering
\caption{Effect of $\alpha$.}\label{ta:alpha}
\begin{tabular}{|l|c|}\hline
$\alpha$&Acc\%\\\hline
1&63.49\\
0.9&63.53\\
0.7&63.70\\
0.6&63.74\\
0.5&63.72\\
0&63.14\\\hline
\end{tabular}
\end{minipage}
\begin{minipage}{0.24\linewidth}
\centering
\caption{Effect of $\gamma$.}\label{ta:gamma}
\begin{tabular}{|l|c|}\hline
$\gamma$&Acc\%\\\hline
1&60.95\\
0.5&62.98\\
0.1&63.75\\
0.05&63.73\\
0.01&63.74\\
0&63.74\\\hline
\end{tabular}
\end{minipage}
\begin{minipage}{0.24\linewidth}
\centering
\caption{Components in InMaP.}\label{ta:comp}
\begin{tabular}{|l|c|}\hline
methods&Acc\%\\\hline
Baseline&60.32\\
InMaP$^{0.25}$&60.83\\
InMaP$^{0.5}$&60.95\\
Sinkhorn&62.53\\
InMaP&63.74\\\hline
\end{tabular}
\end{minipage}
\end{table}

\paragraph{Pseudo label threshold $\alpha$} Since the SoftMax operator can incur the over-smoothing issue for output logits, converting the soft label with high confidence to one-hot is an effective strategy in semi-supervised learning. We adopt and evaluate the performance with the threshold $\alpha$ in Eqn.~\ref{eq:threshold}. 

Table~\ref{ta:alpha} summarizes the results with different thresholds. When $\alpha=1$, the soft label after Sinkhorn distance optimization is adopted for optimization, which already surpasses the baseline by a clear margin of $3.17\%$. When reducing $\alpha$, more soft labels will be converted to one-hot labels and the performance can be further improved by eliminating noise from the SoftMax operator. When letting $\alpha=0$, all pseudo labels become one-hot, which amplifies the noise from low-confidence prediction and degenerates the performance. We find that our method is insensitive to the parameter and fix it as $0.6$ in the remaining experiments if not specified.

\paragraph{Pseudo label smoothing $\gamma$} While the validation set of ImageNet is well-balanced with $50$ examples for each class, we conduct the experiments with different $\gamma$ in Eqn.~\ref{eq:gamma} for demonstration.

Table~\ref{ta:gamma} shows the accuracy on ImageNet with varying $\gamma$. $\gamma=1$ follows the distribution from the text prediction and degenerates to the original pseudo label without Sinkhorn distance refinement. By balancing the distribution of the class assignment with a small $\gamma$, the accuracy can be increased with refined pseudo labels. When $\gamma$ is sufficiently small, the constraint in Sinkhorn distance degenerates to the balanced constraint and shows the desired performance due to the consistency with the ground-truth distribution. We will have the balanced prior distribution as $\gamma=0$ for other data sets if not specified.

\paragraph{Components in InMaP}
Finally, we show the gain from two components in InMaP in Table~\ref{ta:comp}. Let ``InMaP$^{0.25}$'' denote the proxy learning without refined labels and ``InMaP$^{0.5}$'' be the variant optimized by solely applying the threshold $\alpha$ to pseudo labels from the text proxy. ``Sinkhorn'' denotes the pseudo labels refined by Sinkhorn distance without proxy learning. We can observe that with the labels predicted from the text proxy, the recovered vision proxy of InMaP$^{0.25}$ can improve the baseline by $0.51\%$ with the reduction in the modality gap. Then, simply thresholding the prediction from the text proxy can further improve the performance by $0.12\%$ as in InMaP$^{0.5}$. On the other hand, refining the labels via Sinkhorn distance can substantially outperform the baseline by $2.21\%$. Learning with better pseudo labels, InMaP shows an additional gain of $1.21\%$ over refined labels. It confirms our analysis in Theorem~\ref{thm:2} that the proposed intra-modal proxy learning can benefit from appropriate pseudo labels.

\paragraph{Number of text prompts} An ensemble of text prompts shows the superior performance over the single prompt on obtaining text proxy for zero-shot classification~\cite{RadfordKHRGASAM21}. We compare these two strategies on our method and summarize the accuracy on ImageNet in Table~\ref{ta:prompts}. 

\begin{table*}[h]
\centering
\caption{Comparison of accuracy(\%) on ImageNet with different text prompts. ``Single'' denotes the application of the single prompt of ``a photo of a ().'', while ``ensemble'' contains 7 prompts as suggested by \cite{susx}.}\label{ta:prompts}
\begin{tabular}{|l|c|c|c|c|}\hline
Vision encoder&Baseline\_single&Baseline\_ensemble&InMaP\_single&InMaP\_ensemble\\\hline
ResNet-50&58.15&60.32&63.15&63.74\\
ViT-B/16&66.72&68.75&71.93&72.55\\\hline
\end{tabular}
\end{table*}

First, we can observe that the ensemble improves the performance of baseline with a clear margin of $2\%$ with different vision encoders. It shows that the vanilla zero-shot transfer strategy is sensitive to the selection of text prompts and confirms the motivation of prompt learning methods. Second, our method outperforms the baseline with the single or ensemble of text prompts for text proxy, which demonstrates the effectiveness of our proposed intra-modal vision proxy. Moreover, the gap between the single and ensemble prompts shrinks on InMaP to less than $1\%$, which can help reduce the efforts of text prompt tuning.

\paragraph{Running time} InMaP consists of two optimization problems, that is, pseudo label optimization in Eqn.~\ref{eq:sinkhorn} and proxy learning in Eqn.~\ref{eq:proxy}. The former one can be formulated as optimizing the Sinkhorn distance that can be solved efficiently by the standard iterative method in \cite{Cuturi13}. With 20 iterations, the running time is only $0.002$ second, which is negligible. The latter problem is convex and defined with only extracted features. Unlike prompt learning, the expensive encoders are not included in optimization. Even with $2,000$ iterations for gradient descent, the running time is only about $30$ seconds on a single GPU and the cost can be further reduced to $6$ seconds with FP16. In addition, many efficient methods are available for solving convex problems as in Eqn.~\ref{eq:proxy} and can be explored if needed~\cite{BV2014}.

\subsection{Comparison on ImageNet}
In this subsection, we compare InMaP to state-of-the-art methods on ImageNet with different vision encoders provided by CLIP. Table~\ref{ta:imagenet} summarizes the results. Besides the vanilla zero-shot prediction in CLIP, TPT~\cite{ShuNHYGAX22}, SuS-X~\cite{susx} and VisDesc~\cite{VisDesc} are included in the comparison, where TPT leverages test data as ours but for prompt learning and the other two employ external large models that are marked in grey color. Results for these methods are directly borrowed from their original papers.

\begin{table*}[!ht]
\centering
\caption{Comparison of accuracy (\%) on ImageNet with different vision encoders in CLIP. $\dagger$ in grey indicates the application of external large models. The overall best performance is in bold, while the best performance without any additional model is underlined. ``-'' denotes that the result is unavailable in their original papers.}\label{ta:imagenet}
\resizebox{\linewidth}{!}{
\begin{tabular}{|l|c|c|c|c|c|c|c|}\hline
Vision encoder&Baseline~\cite{RadfordKHRGASAM21}&TPT~\cite{ShuNHYGAX22}&InMaP$^{0.5}$&InMaP&\color{da}VisDesc$^\dagger$~\cite{VisDesc}&\color{da}SuS-X-LC$^\dagger$~\cite{susx}&\color{da}SuS-X-SD$^\dagger$~\cite{susx}\\\hline
ResNet-50&60.32&60.74&60.95&\underline{\textbf{63.74}}&\color{da}59.68&\color{da}61.89&\color{da}61.84\\
ViT-B/32&63.77&-&64.82&\underline{\textbf{67.29}}&\color{da}62.97&\color{da}64.73&\color{da}64.71\\
ViT-B/16&68.75&68.98&70.17&\underline{\textbf{72.55}}&\color{da}68.03&\color{da}70.00&\color{da}69.88\\
ViT-L/14&75.96&-&77.14&\underline{\textbf{79.29}}&\color{da}75.00&-&-\\
ViT-L/14@336&77.02&-&78.27&\underline{\textbf{80.21}}&\color{da}76.16&-&-\\\hline
\end{tabular}}
\end{table*}

First, we can observe that our method InMaP$^{0.5}$ outperforms the baseline and TPT with both ResNet-50 and ViT-B/16 as the vision encoder. Unlike the baseline and TPT that optimizes text prompts for images, we propose to reconstruct the proxy in vision space, which reduces the influence from the modality gap and obtains an appropriate intra-modal classifier as demonstrated in Theorem~\ref{thm:2}. In addition, the running time of proxy learning mainly depends on the dimension of extracted features and that with features from ViT-L/14@336 runs faster than ResNet-50, which guarantees the efficiency with different encoders. By further improving pseudo labels with Sinkhorn distance, InMaP shows the best performance among all methods and even surpasses those ones with additional large models, which confirms the effectiveness of our method. Moreover, InMaP has a steady improvement over InMaP$^{0.5}$ using different encoders. The observation indicates that label refinement is complementary to the intra-modal proxy learning, which is consistent with Theorem~\ref{thm:2}. Finally, with the largest backbone provided by CLIP, InMaP achieves $80.21\%$ accuracy on ImageNet, which shows the potential of large pre-trained vision-language models.

After the optimization with the target data set, we demonstrate the performance of InMaP when the target data arrives in a streaming manner but an unlabeled set of relevant data is available. Concretely, the vision proxy is constructed from the unlabeled training set in lieu of the target validation set while the learned proxy is evaluated on the validation set. Table~\ref{ta:training} shows the results of our method learned with unlabeled training set (``InMaP$_{\rm{train}}$'') or validation set (``InMaP$_{\rm{val}}$'') of ImageNet. For InMaP$_{\rm{train}}$, $\tau_I$ and $\alpha$ are reduced to $0.03$ and $0.4$, respectively, while other parameters remain unchanged.

\begin{table*}[!ht]
\centering
\caption{Comparison of accuracy (\%) on ImageNet. For InMaP, ``train'' and ``val'' denote that the vision proxy is obtained from the unlabeled training set and validation set, respectively.}\label{ta:training}
\begin{tabular}{|l|c|c|c|c|c|}\hline
Vision encoder&ResNet-50&ViT-B/32&ViT-B/16&ViT-L/14&ViT-L/14@336\\\hline
InMaP$^{0.5}_{\rm{val}}$&60.95&64.82&70.17&77.14&78.27\\
InMaP$_{\rm{val}}$&63.74&67.29&72.55&79.29&80.21\\
InMaP$^{0.5}_{\rm{train}}$&61.38&65.12&70.37&77.69&78.68\\
InMaP$_{\rm{train}}$&63.98&67.37&72.59&79.21&80.13\\\hline
\end{tabular}
\end{table*}

We can observe that even without the target data for recovering the vision proxy, an appropriate unlabeled data set is sufficient to obtain effective proxy for zero-shot classification. By thresholding the original pseudo labels, InMaP$^{0.5}_{\rm{train}}$ outperforms InMaP$^{0.5}_{\rm{val}}$ over all vision encoders, where the benefit may come from the large size of the training set. Then, if refining pseudo labels with Sinkhorn distance, the performance of proxies obtained with different data becomes similar and is better than the variant without optimized pseudo labels,  which confirms the efficacy of our strategy for improving labels. Finally, this experiment demonstrates that the proposed method is applicable for different scenarios in real-world applications.

\subsection{Comparison on 13 Downstream Tasks}
Then, we evaluate the proposed method on 13 diverse downstream tasks, including Aircraft~\cite{maji2013fine}, Caltech101~\cite{fei2004learning}, Stanford Cars~\cite{krause20133d}, CIFAR-10~\cite{krizhevsky2009learning}, CIFAR-100~\cite{krizhevsky2009learning}, CUB200-2011~\cite{wah2011caltech}, Describable Textures Dataset (DTD)~\cite{cimpoi2014describing}, EuroSAT~\cite{HelberBDB19}, Flowers~\cite{NilsbackZ08}, Food101~\cite{BossardGG14}, Oxford-IIIT Pet (Pets)~\cite{parkhi2012cats}, Sun397~\cite{XiaoHEOT10}, and UCF101~\cite{abs-1212-0402}. These data sets cover a large range of classification tasks such as fine-grained visual categorization, texture recognition, scene categorization, classification with low-resolution images, etc. The same parameters used for ImageNet are applied for most of them except certain parameters for refining labels. Concretely, Caltech and Flowers can benefit from the appropriate reference distribution for generating pseudo labels and we set $\gamma$ in Sinkhorn distance as $\gamma=0.9$ for ResNet and $\gamma=\{1,0.5\}$ for ViT, respectively for the two data sets. Moreover, the threshold $\alpha$ is reduced to $0.3$ on EuroSAT. Other parameters remain unchanged.

\begin{table*}[!ht]
\centering
\caption{Comparison of accuracy (\%) on 13 diverse downstream tasks with ResNet-50 and ViT-B/16. $\dagger$ in grey indicates the application of external large models. The overall best performance is in bold, while the best performance without any additional model is underlined. ``-'' denotes that the result is unavailable in their original papers.}\label{ta:more}
\resizebox{\linewidth}{!}{
\begin{tabular}{|l|c|c|c|c|c|c|c|c|c|c|c|c|c|c|}\hline
&Aircraft&Caltech&Cars&Cifar10&Cifar100&CUB&DTD&EuroSAT&Flowers&Food&Pets&SUN&UCF101&Avg.\\\hline
\multicolumn{13}{|l|}{\textit{ResNet-50:}}\\\hline
Baseline&16.62&86.00&56.31&73.15&40.60&41.37&41.13&26.90&63.13&74.10&81.85&59.25&55.56&55.07 \\
TPT&17.58&\underline{87.02}&58.46&-&-&-&40.84&28.33&62.69&74.88&84.49&61.46&60.82&-\\
InMaP$^{0.5}$&16.95&86.41&58.11&74.23&40.72&41.82&42.14&27.52&66.29&74.95&82.94&59.95&57.49&56.12\\
InMaP&\underline{18.96}&86.73&\underline{\textbf{63.30}}&\underline{\textbf{78.84}}&\underline{\textbf{49.26}}&\underline{\textbf{49.17}}&\underline{44.86}&\underline{35.88}&\underline{66.68}&\underline{\textbf{78.36}}&\underline{\textbf{87.93}}&\underline{\textbf{63.82}}&\underline{\textbf{63.36}}&\underline{\textbf{60.55}}\\
\color{da}VisDesc$^\dagger$&\color{da}16.26&\color{da}88.11&\color{da}54.76&\color{da}73.22&\color{da}39.69&\color{da}48.31&\color{da}41.96&\color{da}37.60&\color{da}65.37&\color{da}76.80&\color{da}82.39&\color{da}59.84&\color{da}58.47&\color{da}57.14\\
\color{da}SuS-X-LC$^\dagger$&\textbf{\color{da}21.09}&\textbf{\color{da}89.57}&\color{da}57.17&\color{da}74.95&\color{da}44.48&\color{da}48.86&\color{da}49.23&\color{da}44.23&\color{da}67.07&\color{da}77.62&\color{da}86.59&\color{da}63.01&\color{da}61.49&\color{da}60.41 \\
\color{da}SuS-X-SD$^\dagger$&\color{da}19.47&\color{da}89.53&\color{da}57.27&\color{da}74.69&\color{da}44.63&\color{da}49.12&\textbf{\color{da}50.59}&\textbf{\color{da}45.57}&\textbf{\color{da}67.72}&\color{da}77.58&\color{da}85.34&\color{da}62.95&\color{da}61.54&\color{da}60.46 \\
\hline
\multicolumn{13}{|l|}{\textit{ViT-B/16:}}\\\hline
Baseline&23.13&93.51&66.29&91.09&67.29&49.36&45.09&50.17&67.64&84.43&87.00&65.68&65.21&65.84 \\
TPT&24.78&\underline{94.16}&66.87&-&-&-&47.75&42.44&68.98&84.67&87.79&65.50&68.04&-\\
InMaP$^{0.5}$&23.55&93.59&67.96&92.48&68.37&51.54&46.28&53.64&69.64&85.76&88.17&67.18&67.25&67.34\\
InMaP&\underline{28.35}&93.59&\underline{\textbf{73.00}}&\underline{\textbf{93.27}}&\underline{\textbf{72.51}}&\underline{\textbf{57.87}}&\underline{50.77}&\underline{\textbf{63.98}}&\underline{71.28}&\underline{87.76}&\underline{\textbf{92.94}}&\underline{\textbf{70.85}}&\underline{\textbf{72.06}}&\underline{\textbf{71.40 }}\\
\color{da}VisDesc$^\dagger$&-&-&-&-&-&\color{da}57.75&\color{da}45.59&\color{da}48.82&-&\textbf{\color{da}88.50}&\color{da}86.92&-&-&-\\
\color{da}SuS-X-LC$^\dagger$&\textbf{\color{da}30.51}&\color{da}93.91&\color{da}65.90&\color{da}90.94&\color{da}68.66&\color{da}56.96&\textbf{\color{da}55.32}&\color{da}58.06&\color{da}73.08&\color{da}86.08&\color{da}91.58&\color{da}67.85&\color{da}66.72&\color{da}69.66 \\
\color{da}SuS-X-SD$^\dagger$&\color{da}28.68&\textbf{\color{da}93.96}&\color{da}66.13&\color{da}89.88&\color{da}68.47&\color{da}57.11&\color{da}54.55&\color{da}57.49&\textbf{\color{da}73.81}&\color{da}86.08&\color{da}90.57&\color{da}67.73&\color{da}66.59&\color{da}69.31 \\
\hline
\end{tabular}}
\end{table*}

Table~\ref{ta:more} summarizes the comparison. First, we observe that with the same parameters for all data sets, InMaP$^{0.5}$ shows the better performance than the baseline consistently and obtains $1\%$ improvement on average by ResNet-50. It demonstrates that recovering the proxy in vision space can benefit different tasks for zero-shot transfer with pre-trained CLIP models. Second, by incorporating refined labels from optimizing Sinkhorn distance, InMaP can further improve InMaP$^{0.5}$ by more than $4\%$ on average. The superior performance confirms that appropriate labels can facilitate the intra-modal proxy learning to obtain better proxy reconstruction in the target space. In addition, our method is the best among all methods on 8 out of 13 data sets with ResNet-50, which shows that leveraging unlabeled target data can be more effective than external models for zero-shot learning. Moreover, the same parameters are shared by different tasks for InMaP, while fine-tuning them by a validation set for each data set can further improve the performance.

With ViT-B as the vision backbone, the accuracy of the baseline increases by $10\%$, which shows the potential of large models. The proposed method can also benefit from the large models and achieve the accuracy of $71.4\%$ on average, which is more than $5\%$ better than the baseline. Compared with SuS-X with external models, InMaP outperforms it by $1.74\%$ that confirms the effectiveness of intra-modal proxy learning. Moreover, for the data set with a high zero-shot accuracy as Caltech, the predicted distribution can approximate the ground-truth well and the label refinement with Sinkhorn distance can be skipped by letting $\gamma=1$. Finally, both InMaP$^{0.5}$ and InMaP improves over the baseline on diverse tasks with ResNet and ViT, which illustrates that the proposed method is applicable for different vision encoders. 

\section{Conclusion}
In this work, we focus on improving the zero-shot transfer with CLIP. First, our theoretical analysis indicates that the modality gap between the text and vision space obtained by CLIP will be preserved, which can degenerate the performance of zero-shot transfer. To mitigate the challenge, we propose to recover the proxy of each class in the vision space with the help from the text proxy. Concretely, by leveraging the unlabeled target data and refining the prediction from the text proxy, we can obtain the vision proxy via learning with unlabeled data and the corresponding pseudo label. Experiments on diverse data sets demonstrate that our method can improve the zero-shot accuracy consistently with different vision encoders. Exploring our method with other pre-trained models rather than CLIP can be our future work. Besides, recent work~\cite{susx} shows that other large pre-trained models, i.e., GPT-3, can help CLIP for zero-shot prediction. Combining our method with existing large models is also an interesting future direction. Finally, our analysis for the efficacy of the text proxy is from the perspective of approximating the supervised learning for images as in Eqn.~\ref{eq:vision}. Text information can be complementary to the knowledge in images~\cite{wang2023improved} and investigating the potential benefit of text for vision tasks can help understand vision-language models better.

\section*{Limitations}
While our analysis is for the modality gap in contrastive pre-training with the architecture of dual encoders, it cannot be applied directly for other pre-training paradigms with different architectures. For example, BEiT-3~\cite{abs-2208-10442} has masked data modeling for pre-training with the shared backbone for different modalities and CoCa~\cite{YuWVYSW22} has encoder-decoder captioning for pre-training. The analysis for the modality gap in these methods has been less investigated and can be inspired by our results as a future direction.

{\small
\bibliographystyle{abbrv}
\bibliography{inmap}
}

\newpage
\appendix
\section{Theoretical Analysis}
\subsection{Proof of Proposition~\ref{prop:1}}
\begin{proof}
According to the definition, we have
\[(1-\delta)\exp(\x_i^\top \t_i/\tau) = \delta\sum_{j:j\neq i}^m \exp(\x_i^\top \t_j/\tau)\]

Hence
\begin{align*}
&\exp(\x_i^\top \t_i/\tau) = \frac{\delta}{1-\delta}\sum_{j:j\neq i}^m \exp(\x_i^\top \t_j/\tau)\leq \frac{\delta(m-1)}{1-\delta}\exp(\x_i^\top \t_k/\tau)\\
&\exp(\x_i^\top \t_i/\tau) \geq \frac{\delta}{1-\delta}\exp(\x_i^\top \t_k/\tau)
\end{align*}
where $k = \arg\max_{j:j\neq i} \x_i^\top \t_j$. Since logarithm function is monotone, the similarity between positive pair can be bounded by  
\[\x_i^\top \t_k + c_1\tau\leq \x_i^\top \t_i \leq \x_i^\top \t_k + c_2\tau\]
where $c_1 = \log(\frac{\delta}{1-\delta})$ and $c_2 = \log(\frac{\delta(m-1)}{1-\delta})$.
\end{proof}

\subsection{Proof of Proposition~\ref{prop:3}}
\begin{proof}
According to the definition, we have
\begin{eqnarray*}
&&p_{i,j}' = \frac{\exp(\x_i^\top \z_j/\tau_T)}{\sum_k\exp(\x_i^\top \z_k/\tau_T)} = \frac{\exp(\sqrt{a}\x_i^\top \z_j^x/\tau_T)}{\sum_k\exp(\sqrt{a}\x_i^\top \z_k^x/\tau_T)}= \frac{\exp(\sqrt{a}\x_i^\top \w_j^*/\tau_T)}{\sum_k\exp(\sqrt{a}\x_i^\top \w_j^*/\tau_T)}
\end{eqnarray*}
If $\sqrt{a}/\tau_T = 1/\tau_I$, the predictions are equivalent.
\end{proof}

\subsection{Proof of Theorem~\ref{thm:1}}
\begin{proof}
According to the definition, we have
\begin{eqnarray}\label{eq:diff}
\|Z-W^*\|_F^2=2C - 2\sqrt{a}\langle Z^x, W^*\rangle = 2C(1-\sqrt{a}) + \sqrt{a}\|Z^x - W^*\|_F^2
\end{eqnarray}
To bound the similarity between $Z^x$ and $W^*$, we assume $Z^x = U_r A^\top$ where $U_r\in\R^{d\times r}$ is a subset of $U\in\R^{d\times d'}$. By solving the problem $\min_A \|U_r A - W^*\|_F^2$, we have $Z^x = U_r A = U_r(U_r^\top U_r)^{-1}U_r^\top W^* = U_rU_r^\top W^*$ that is projecting $W^*$ to the subspace spanned by $U_r$. Since $U_r$ is a subset from $U$, we have $U_rU_r^\top = \sum_i^{d'} r_i u_iu_i^\top$ with $\forall i, r_i\in\{0,1\}$ and $\sum_i r_i=r$. Then, we have
\[\|U_rU_r^\top W^* - W^*\|_F^2 = \|\sum_i(r_i-1)s_iu_iv_i^\top\|_F^2 = \sum_i (r_i-1)^2s_i^2\]
Therefore
\[\|Z^x - W^*\|_F^2\geq\|U_rU_r^\top W^* - W^*\|_F^2 \geq \sum_{i=r+1}^{d'} s_i^2\]
where the last inequality is from keeping the largest singular values for minimizing the approximation loss.
The target result is obtained by taking this inequality to Eqn.~\ref{eq:diff}.
\end{proof}

\subsection{Proof of Theorem~\ref{thm:2}}
\begin{proof}
First, we note that $L(P,W)$ is a convex function in $W$. We assume that it is $\mu$-strongly convex in $W$ such that for the arbitrary $(W_1,W_2)$, we have
\[L(P,W_1)\geq L(P,W_2) + \langle\nabla_{W_2} L(P,W_2),W_1-W_2\rangle + \frac{\mu}{2}\|W_1-W_2\|_F^2\]

According to the optimality of $W'^*$, we have
\begin{align}\label{eq:upper}
&L(P',W'^*) - L(Y,W^*)\leq L(P',W^*) - L(Y,W^*)= \langle P' - Y, -\log(P_{W^*}) \rangle
\end{align}

The lower-bound can be obtained from the strongly convexity of $W$ as
\begin{align}\label{eq:lower}
&L(P',W'^*) - L(Y,W^*) = L(Y, W'^*) - L(Y,W^*) +\langle P' - Y, -\log(P_{W'^*})\rangle\nonumber\\
&\geq \frac{\mu}{2} \|W'^* - W^*\|_F^2+\langle P' - Y, -\log(P_{W'^*})\rangle
\end{align}
Combining inequalities in Eqns.~\ref{eq:upper} and \ref{eq:lower}, we have 
\[\|W'^* - W^*\|_F^2\leq \frac{2}{\mu}\langle P' - Y, \log(P_{W'^*}) - \log(P_{W^*}) \rangle\leq \frac{2}{\mu}\|P' - Y\|_F\|\log(P_{W'^*}) - \log(P_{W^*})\|_F\]
\end{proof}

\end{document}